\documentclass[]{bytedance_seed}

\usepackage{amsthm}
\usepackage{amsfonts}
\usepackage{amssymb}
\newtheorem{theorem}{Theorem}[section]
\newtheorem{proposition}[theorem]{Proposition}
\newtheorem{lemma}[theorem]{Lemma}

\usepackage{multirow}       
\usepackage[table,xcdraw]{xcolor}
\definecolor{MyBlue}{RGB}{227,241,245}

\usepackage[toc,page,header]{appendix}

\usepackage{minitoc}

\title{ Motion-Aware Caching for Efficient Autoregressive Video Generation  }

\author[1,2,*]{Jing Xu}
\author[1,*]{Yuexiao Ma}
\author[1,*]{Xuzhe Zheng}
\author[2,\dagger]{Xing Wang}
\author[3,4,5]{Shiwei Liu}
\author[2]{Chenqian Yan}
\author[1]{Xiawu Zheng}
\author[1]{Rongrong Ji}
\author[1,\S]{Fei Chao}
\author[2,\S]{Songwei Liu}

\affiliation[1]{Key Laboratory of Multimedia Trusted Perception and Efficient Computing, Ministry of Education of China, Xiamen University, 361005, P.R. China}
\affiliation[2]{ByteDance}
\affiliation[3]{Max Planck Institute for Intelligent Systems}
\affiliation[4]{ELLIS Institute Tübingen}
\affiliation[5]{Tübingen AI Center}

\contribution[*]{Equal Contribution}
\contribution[\dagger]{Project lead}
\contribution[\S]{Corresponding author}

\abstract{
Autoregressive video generation paradigms offer theoretical promise for long video synthesis, yet their practical deployment is hindered by the computational burden of sequential iterative denoising.
While cache reuse strategies can accelerate generation by skipping redundant denoising steps, existing methods rely on coarse-grained chunk-level skipping that fails to capture fine-grained pixel dynamics.
This oversight is critical: pixels with high motion require more denoising steps to prevent error accumulation, while static pixels tolerate aggressive skipping.
We formalize this insight theoretically by linking cache errors to residual instability, and propose \textbf{MotionCache}, a motion-aware cache framework that exploits inter-frame differences as a lightweight proxy for pixel-level motion characteristics.
MotionCache employs a coarse-to-fine strategy: an initial warm-up phase establishes semantic coherence, followed by motion-weighted cache reuse that dynamically adjusts update frequencies per token.
Extensive experiments on state-of-the-art models like SkyReels-V2 and MAGI-1 demonstrate that MotionCache achieves significant speedups of $\textbf{6.28}\times$ and $\textbf{1.64}\times$ respectively, while effectively preserving generation quality (VBench: $1\%\downarrow$ and $0.01\%\downarrow$ respectively).
The code is available at https://github.com/ywlq/MotionCache.
}

\begin{document}
\maketitle


\section{Introduction}

Video generation models \cite{ma2024latte,yang2024cogvideox,zheng2024open,kong2024hunyuanvideo,peng2025open,wan2025wan,gao2025seedance} have achieved remarkable success, facilitating applications ranging from autonomous driving \cite{fu2024exploring,wen2024panacea,gao2025magicdrive} and cinematic creation \cite{xing2025motioncanvas,chen2025skyreels} to social media \cite{brooks2024video}. While architectures have evolved from U-Nets \cite{saharia2022photorealistic,ramesh2022hierarchical,blattmann2023stable} to scalable Diffusion Transformers (DiTs) \cite{peebles2023scalable}, practical deployment is hindered by the prohibitive costs of iterative denoising. Moreover, the quadratic complexity of attention mechanisms regarding video resolution and duration imposes strict memory limits, creating severe bottlenecks for real-time adoption.

\begin{figure*}[t]
    \centering 
    \centerline{\includegraphics[width=\textwidth]{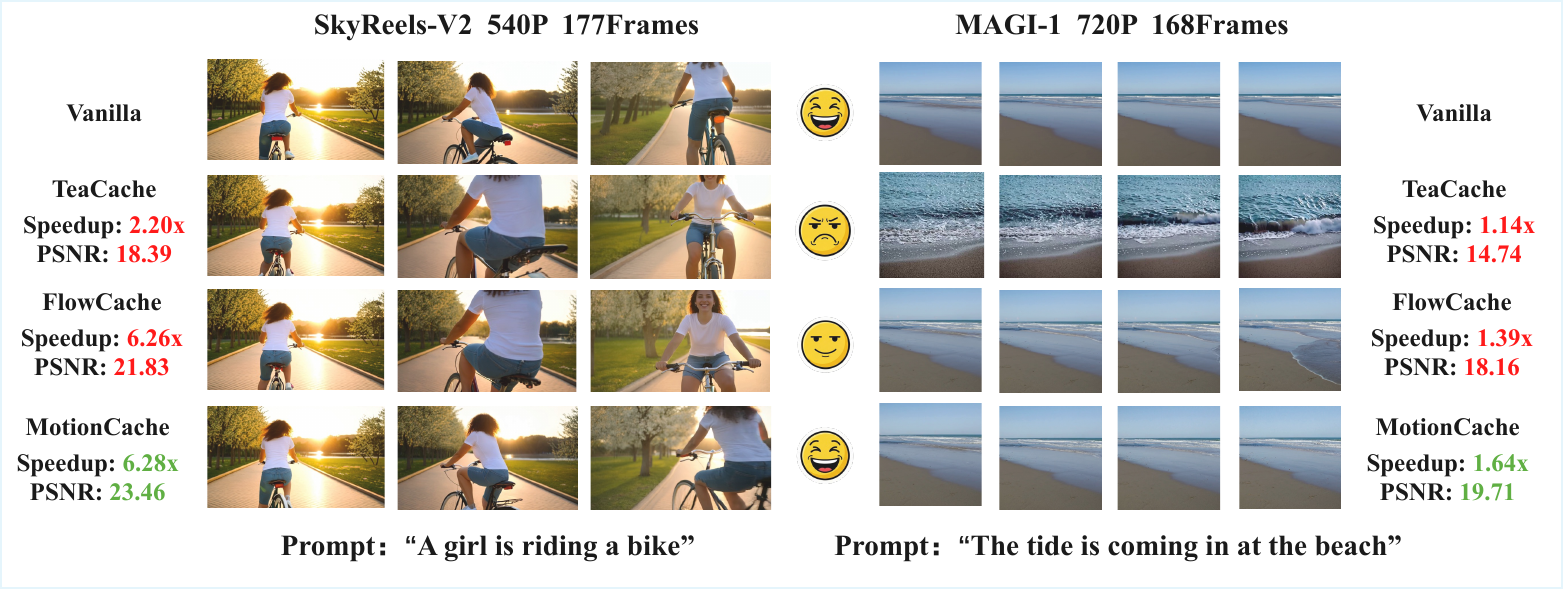}}  
    \caption{MotionCache accelerates video generation while maintaining high visual fidelity. On SkyReels-V2 and MAGI-1, our method achieves 6.28$\times$ and 1.64$\times$ speedups with superior PSNR. In contrast, TeaCache fails to maintain texture details and FlowCache suffers from structural inconsistency, while MotionCache preserves both structural integrity and temporal coherence comparable to the Vanilla baseline.
}
    \label{front_viz}
\end{figure*}

To address these scalability limitations, autoregressive video generation models \cite{teng2025magi,chen2025skyreels} leverage the Causal Diffusion-Forcing (CDF) framework \cite{chen2024diffusion,yin2025slow,song2025history,yang2025longlive} to adapt the next-token prediction paradigm to the video domain. Unlike full-sequence methods, these models decouple memory usage from total video duration by partitioning the stream into chunks and utilizing KV cache. This strategy effectively reduces attention complexity from quadratic to linear, keeping memory consumption bounded and theoretically permitting infinite generation. Nevertheless, despite these structural optimizations, the autoregressive generation of high-resolution and long-duration videos remains inherently time-consuming. For example, using an $A800$ GPU with batch size $1$, generating a $7$-second video at $540 \times 540$ resolution requires approximately $27$ minutes of inference time for the SkyReels-V2 model. 

To mitigate the computational burden of iterative denoising, caching-based strategies have emerged as pivotal solutions. By exploiting the high temporal redundancy inherent in the diffusion process to reuse intermediate features or residuals, these methods are broadly categorized into layer-level and step-level approaches. Layer-level methods \cite{selvaraju2024fora,zou2024accelerating,zhao2024real,cui2025bwcache} cache intermediate feature maps but necessitate storing features for every layer. This results in memory consumption that grows linearly with model depth, an overhead further exacerbated by advanced predictors using Taylor expansions \cite{liu2025reusing,zheng2025compute,liu2025speca}. Conversely, step-level methods \cite{liu2025timestep,kahatapitiya2025adaptive,yu2025ab,bu2025dicache} store residuals for reuse in skipped steps. While offering negligible memory overhead, their prediction precision often falls short compared to finer-grained layer-level techniques.Recent efforts have further explored error correction, sparse computation, and adaptive optimization strategies to improve efficient inference robustness and scalability~\cite{zhong2025test,muhtar2026does,zheng2021information,zhou2021ec,zhou2022training,zhou2024training}.

Crucially, these existing approaches are primarily tailored for standard DiTs and do not generalize to autoregressive video generation models. FlowCache \cite{ma2026flow} represents the first attempt to bridge this gap by exploiting the temporal heterogeneity of autoregressive chunks. However, FlowCache relies on a coarse-grained binary strategy where an entire chunk must simultaneously be computed or skipped. This approach overlooks fine-grained token-level temporal redundancy and fails to account for the significant spatial and content discrepancies between different frames within the same timestep.

To address these limitations, we present MotionCache, a motion-aware caching framework grounded in a theoretical analysis linking caching error to residual instability. Leveraging the intra-chunk frame difference as a lightweight, high-fidelity proxy for pixel-level motion characteristics, MotionCache operationalizes a hierarchical coarse-to-fine inference schedule. This mechanism initially secures global structural integrity through a warm-up phase, and subsequently transitions to a token-wise adaptive policy that dynamically allocates computational resources—prioritizing updates for high-motion regions while efficiently retrieving cached residuals for static backgrounds. Through extensive experimentation, we demonstrate that our approach not only achieves superior acceleration ratios but also preserves the quality of generated videos.

In summary, our key contributions are as follows:
\begin{itemize}
    \item We provide a rigorous theoretical analysis of the approximation error in feature caching, identifying that the error is strictly bounded by residual instability. We further uncover the fundamental Heterogeneous Temporal Redundancy and Intra-Chunk Frame Discrepancy in autoregressive models, demonstrating the diverse update requirements across different frames and tokens that traditional coarse-grained strategies overlook.
    \item We establish a theoretical link between residual stability and video motion dynamics, proving that the frame difference serves as a mathematically grounded upper bound for caching error. Based on this, we introduce motion-aware token importance, a lightweight and high-fidelity proxy that enables precise identification of dynamic regions.
    \item We propose MotionCache, a novel motion-aware acceleration framework that implements a coarse-to-fine inference schedule. By synergizing a structural warm-up phase with a motion-characteristics-weighted token accumulation policy, MotionCache dynamically allocates computational resources, prioritizing high-motion details.
    \item Extensive evaluations on state-of-the-art autoregressive video generation models, including SkyReels-V2 and MAGI-1, demonstrate that MotionCache significantly outperforms existing methods. It achieves speedups of $7.26\times$ and $2.07\times$ respectively, while delivering superior visual fidelity and temporal coherence. These results establish MotionCache as the new state-of-the-art for efficient autoregressive video generation.
\end{itemize}

\section{Related Work}

\textbf{AutoRegressive Video Generation}
Our work builds upon recent advances in autoregressive video generation models \cite{teng2025magi,chen2025skyreels}, which fundamentally integrate the next-token prediction paradigm of Large Language Models (LLMs) into the video synthesis process. In this framework, the continuous video stream is discretized into a sequence of video chunks, which are generated sequentially based on Causal Diffusion-Forcing (CDF) \cite{chen2024diffusion,yin2025slow,song2025history,yang2025longlive} inference paradigms. The underlying generator for each chunk typically employs a diffusion backbone optimized with flow matching objectives \cite{lipman2022flow}. By leveraging a fixed attention window alongside this chunking technique, these methods effectively circumvent the quadratic computational complexity inherent in full-sequence modeling. This architecture not only ensures scalable efficiency but also achieves remarkable success in synthesizing high-fidelity video content with strong causal consistency and temporal coherence.

\textbf{Feature Caching-based Acceleration.}
Feature caching accelerates inference by exploiting temporal redundancy in a training-free manner. Early adaptations like FORA \cite{selvaraju2024fora} and $\Delta$-DiT \cite{chen2024delta} employed fixed reuse schedules, while recent advancements focus on dynamic, content-aware policies. Methods such as TeaCache \cite{liu2025timestep}, ERTACache\cite{peng2025ertacache} and AdaCache \cite{kahatapitiya2025adaptive} estimate caching intervals based on input differences, offline policy calibration or video complexity, whereas TaylorSeer \cite{liu2025reusing} predicts feature trajectories via Taylor expansions. In the autoregressive domain, FlowCache \cite{ma2026flow} extends these concepts to chunk-level skipping. However, these methods predominantly operate at a coarse granularity—treating entire timesteps or chunks as atomic units—thereby failing to exploit fine-grained token-level redundancy and struggling to adapt to spatially heterogeneous motion dynamics. Related efficient inference studies have also investigated quantization-aware optimization, error reduction, adaptive sparsity, and stable transformer acceleration strategies across language and vision models~\cite{ma2024affinequant,ma2023ompq,ma2023solving,ma2024outlier,zhong2023s,zhong2025towards,zhong2024erq}.Beyond efficient generation, adaptive representation learning and multimodal optimization have also been extensively explored in detection, localization, retrieval, and generation tasks~\cite{sun2026minimizing,sun2026ai,sun2025towards,sun2025continual,lei2024rle,tan2024knowing,feng2025mdreid,li2026find,sun2024diffusionfake,feng2025multi,chen2025clip,chen2024adaptive,chen2023category,chen2022lctr,chen2021e2net,lin2024pair,lin2024difftv,linapoavatar,wang2025dlf}.

\begin{figure*}[t]
  \centering
  \includegraphics[width=\textwidth]{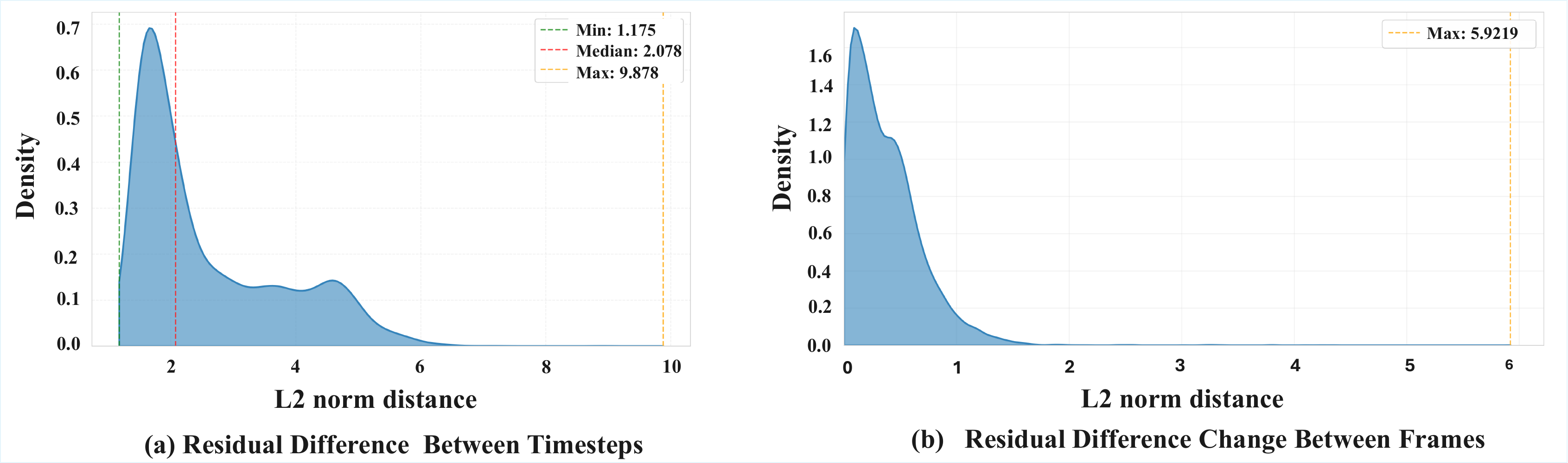}
  \caption{
    (a) \textbf{Heterogeneous Temporal Redundancy:} The distribution of residual differences between adjacent timesteps exhibits a long-tailed pattern.
    While the majority of tokens cluster around low values, a significant tail extends to high values, indicating highly non-uniform update requirements across tokens.
    (b) \textbf{Intra-Chunk Frame Discrepancy:} The distribution of residual changes across distinct frames within the same chunk reveals significant variation. This wide dynamic range confirms that frames within a single autoregressive chunk possess distinct motion characteristics, rendering coarse-grained cache suboptimal.
  }
  \label{fig:frame_time_diff}
\end{figure*}


\section{Preliminaries}
\textbf{Diffusion Model.}
Diffusion models~\cite{ronneberger2015u,peebles2023scalable} synthesize data by reversing a noise injection process. Under the Flow Matching paradigm~\cite{lipman2022flow}, the forward transition from data $\pi_0$ to Gaussian prior $\pi_1$ follows a linear interpolation path:
\begin{equation}\label{eq:flow_match}
    \mathbf{x}_t = (1 - \sigma(t)) \cdot \mathbf{x}_{data} + \sigma(t) \cdot \mathbf{x}_{noise},
\end{equation}
where $\sigma(t)$ is a monotonic scheduling function. The reverse denoising phase recovers data by approximating a time-dependent velocity field $v_\theta$, governed by the ODE $d\mathbf{x}/dt = v_\theta(\mathbf{x}_t, t, c)$, where $c$ denotes conditional inputs. Inference is typically performed using numerical solvers like Euler's method \cite{karras2022elucidating} to iteratively update the sample:
\begin{equation}\label{eq:flow_match_denoise}
    \mathbf{x}_{t_{i-1}} = \mathbf{x}_{t_i} + v_\theta(\mathbf{x}_{t_i}, t_i, c)\Delta t_i.
\end{equation}

\textbf{AutoRegressive Video Generation Model.}
To circumvent the quadratic complexity inherent in full-sequence modeling, the framework adopts an autoregressive generation paradigm by decomposing long videos into discrete units. Formally, a video sequence is partitioned into $k$ latent chunks, denoted as $\{\mathbf{X}^1, \dots, \mathbf{X}^k\}$. Each chunk $\mathbf{X}^i \in \mathbb{R}^{F \times H \times W \times C}$ represents a high-dimensional latent representation characterized by a temporal duration of $F$, a spatial resolution of $H \times W$, and $C$ channels. The generation of the $i$-th chunk is conditioned on the preceding context, where the denoising update at timestep $t$ is governed by the Euler \cite{karras2022elucidating} discretization step:
\begin{equation}\label{eq:flow_match_denoise_chunk}
    \mathbf{X}_{t-1}^i = \mathbf{X}_t^i + v_\theta(\mathbf{X}_t^i, t, c) \cdot \Delta t.
\end{equation}
Here, $t \in [(i - 1)T /l, (i + l - 1)T /l]$ represents the timestep, where $T$ denotes the total number of discretization steps and $l$ indicates the maximum window size permitted for the denoising process.

\textbf{Feature Caching Strategies.}
Feature caching accelerates inference by exploiting temporal redundancy. Caching decisions rely on metrics like the relative L1 distance \cite{liu2025timestep,bu2025dicache,cui2025bwcache} between consecutive inputs $\mathbf{X}_t^i$:
\begin{equation}\label{eq:chunk_l1_dist}
    L1_{rel}(\mathbf{X}^i, t) = \frac{\|\mathbf{X}_t^i - \mathbf{X}_{t+1}^i\|_1}{\|\mathbf{X}_{t+1}^i\|_1}.
\end{equation}
During full computation, the system caches the residual, defined as the difference between the predicted velocity and input latent:
\begin{equation}\label{eq:chunk_residual}
    \mathcal{R}_t^i = v_\theta(\mathbf{X}_t^i, t, c) - \mathbf{X}_t^i.
\end{equation}
When the accumulated relative L1 distance falls below the threshold, the forward pass is bypassed, and the velocity is approximated by reusing the cached residual:
\begin{equation}\label{eq:noise_predict}
    \tilde{v}_{t-1}^i \approx \mathbf{X}_{t-1}^i + \mathcal{R}_t^i.
\end{equation}
This mechanism effectively reduces computational overhead by leveraging local feature stability.

\section{Analysis of Caching Error}
\label{sec:analysis}

To motivate our transition from coarse-grained chunk skipping to fine-grained token-wise caching, we analytically investigate the source of approximation error in the caching mechanism. We demonstrate that the caching error is theoretically bounded by the inconsistency of feature residuals across timesteps, and this inconsistency is highly correlated with the underlying motion dynamics.

\subsection{Theoretical Error Bound of Feature Caching}
\label{sec:error_bound}

Consider the denoising process for the $i$-th video chunk at timestep $t-1$. We quantify the \textit{Local Approximation Error} $\epsilon_{t-1}^i$ as the Euclidean distance between the ground-truth output derived from full computation and the approximated output derived from residual reuse. 
Based on the flow-matching update rule, we derive the following relationship regarding the caching error:

\begin{proposition}[Residual Inconsistency Principle]
\label{prop:residual_inconsistency}
The approximation error at timestep $t-1$ for chunk $i$ is strictly proportional to the magnitude of the vector difference between the true residual $\mathcal{R}_{t-1}^i$ and the cached residual $\mathcal{R}_{t}^i$:
\begin{equation}\label{eq:cache_error_final}
    \epsilon_{t-1}^i = \Delta t \cdot \| \mathcal{R}_{t-1}^i - \mathcal{R}_{t}^i \|_2.
\end{equation}
\end{proposition}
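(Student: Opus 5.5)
The argument is a direct algebraic computation from the Euler update in Eq.~\eqref{eq:flow_match_denoise_chunk} together with the residual definition in Eq.~\eqref{eq:chunk_residual} and the reuse rule in Eq.~\eqref{eq:noise_predict}. I will interpret the local approximation error $\epsilon_{t-1}^i$ as the Euclidean distance between two next iterates. Both start from the \emph{same} input latent $\mathbf{X}_{t-1}^i$. One is produced with the true velocity $v_\theta(\mathbf{X}_{t-1}^i, t-1, c)$; the other is produced with the cached approximation $\tilde v_{t-1}^i$. Fixing a common starting point is what makes the error purely local: it isolates the one-step effect of reuse and ignores any error already accumulated in $\mathbf{X}_{t-1}^i$ from earlier skipped steps.

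The key steps, in order, are as follows. First, write the full-computation update $\mathbf{X}_{t-2}^i = \mathbf{X}_{t-1}^i + v_\theta(\mathbf{X}_{t-1}^i, t-1, c)\,\Delta t$. Substitute the residual identity $v_\theta(\mathbf{X}_{t-1}^i, t-1, c) = \mathbf{X}_{t-1}^i + \mathcal{R}_{t-1}^i$, which is just Eq.~\eqref{eq:chunk_residual} read at timestep $t-1$. Second, write the cached update $\tilde{\mathbf{X}}_{t-2}^i = \mathbf{X}_{t-1}^i + \tilde v_{t-1}^i\,\Delta t$, and take Eq.~\eqref{eq:noise_predict} as an equality defining the cached velocity, $\tilde v_{t-1}^i = \mathbf{X}_{t-1}^i + \mathcal{R}_t^i$. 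Third, subtract the two updates. The common term $\mathbf{X}_{t-1}^i$ cancels, both in the base point and inside the two velocities, leaving
\begin{equation*}
\mathbf{X}_{t-2}^i - \tilde{\mathbf{X}}_{t-2}^i = \Delta t\,(\mathcal{R}_{t-1}^i - \mathcal{R}_t^i).
\end{equation*}
Fourth, take the $\ell_2$ norm and use absolute homogeneity. This gives $\epsilon_{t-1}^i = |\Delta t|\,\|\mathcal{R}_{t-1}^i - \mathcal{R}_t^i\|_2$. Writing the step size as a positive magnitude, or noting that $\Delta t$ is fixed for the schedule, removes the absolute value and yields the stated Eq.~\eqref{eq:cache_error_final}.

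The main obstacle is not computational but definitional. I need to fix precisely what $\epsilon_{t-1}^i$ compares, namely the same input evaluated by the full network versus by residual reuse. I also need to read Eq.~\eqref{eq:noise_predict} as an exact definition of $\tilde v$ rather than a loose approximation, and handle the sign convention of $\Delta t$. Once these conventions are pinned down, the cancellation of $\mathbf{X}_{t-1}^i$ is exactly why the residual parametrization is used. I would also remark that, if the comparison were instead made in velocity space, the same computation gives $\|\mathcal{R}_{t-1}^i - \mathcal{R}_t^i\|_2$ without the $\Delta t$ factor. The proportionality claim, and the token-wise version obtained by restricting the norm to each token, then follow immediately.
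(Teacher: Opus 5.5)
Your proof is correct and follows the paper's argument: both fix the common input $\mathbf{X}_{t-1}^i$, substitute $v_\theta(\mathbf{X}_{t-1}^i, t-1, c) = \mathbf{X}_{t-1}^i + \mathcal{R}_{t-1}^i$ and the cached velocity $\mathbf{X}_{t-1}^i + \mathcal{R}_t^i$ into the Euler step, subtract so the $\mathbf{X}_{t-1}^i$ terms cancel, and take the $\ell_2$ norm. Your observation that the exact result is $|\Delta t|\,\|\mathcal{R}_{t-1}^i - \mathcal{R}_t^i\|_2$ is a small sharpening that the paper's proof skips over.
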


The detailed derivation is provided in Appendix \ref{sec:proof_prop}.
This proposition provides a fundamental insight: the reliability of the caching mechanism is governed by the temporal stability of the residual term. A larger deviation between the reused residual and the current true residual leads directly to a larger caching error. Therefore, the optimal caching policy must selectively calculate tokens where $\| \mathcal{R}_{t-1}^i - \mathcal{R}_{t}^i \|$ is significant, while retrieving tokens where this difference is negligible.

\subsection{Empirical Observations}
\label{subsec:empirical_obs}

Guided by Proposition \ref{prop:residual_inconsistency}, we analyze the distribution of residual differences in actual video generation.

\textbf{Heterogeneous Temporal Redundancy.}
\cref{fig:frame_time_diff} (a) presents the Kernel Density Estimation (KDE) of residual differences between adjacent timesteps. The resulting long-tailed distribution—characterized by a low median ($2.078$) yet a significant tail reaching $9.878$—reveals that feature update requirements are highly non-uniform. This heterogeneity invalidates uniform chunk-wise caching, which inevitably wastes computation on static regions or degrades dynamic ones, thereby motivating the need for an adaptive token-wise strategy.

\textbf{Intra-Chunk Frame Discrepancy.} 
\cref{fig:frame_time_diff} (b) illustrates the residual heterogeneity across distinct frames within a single autoregressive chunk. Since video VAEs compress multiple consecutive raw frames into a single latent frame, distinct latent frames correspond to different temporal segments of the original video. Consequently, the residual distribution extends significantly (max difference $5.9219$), reflecting the varied content evolution across these segments. This substantial intra-chunk discrepancy confirms that frames are not uniformly redundant; thus, treating the entire chunk as an atomic unit is suboptimal, necessitating a more fine-grained update mechanism.

\subsection{Theoretical Connection: Residual Stability and Motion Dynamics}
Since the ideal residual difference is computationally inaccessible prior to inference, we require a lightweight proxy. We establish a theoretical bound linking the temporal instability of residuals to the spatial-temporal variations of the input latent.

\begin{lemma}[Motion-Induced Residual Instability]
\label{lemma:motion_instability}
Let $\mathcal{R}(X, t)$ be the continuous residual function derived from the velocity field. Assuming the temporal gradient of the residual field $\nabla_t \mathcal{R}$ satisfies the Lipschitz condition with respect to the input latent $X$, the residual difference across timesteps is bounded by the intra-chunk frame difference:
\begin{equation}\label{eq:lemma}
    \| \mathcal{R}_{t-1}(\mathbf{X}_{t-1}^{(i, f)}) - \mathcal{R}_{t}(\mathbf{X}_{t}^{(i, f)}) \|_2 \lesssim C \cdot \| \mathbf{X}_{t}^{(i, f)} - \mathbf{X}_{t}^{(i, f-1)} \|_2,
\end{equation}
where $\mathbf{X}_{t}^{(i, f)}$ and $\mathbf{X}_{t}^{(i, f-1)}$ denote the latents of the $f$-th and $(f-1)$-th frames in the $i$-th chunk at timestep $t$, and $C$ is a constant.
\end{lemma}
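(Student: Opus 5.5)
The plan is to treat $\mathcal{R}(X,t)$ as a function that is $C^1$ jointly in $(X,t)$. The residual difference in \eqref{eq:lemma} is first written as the integral of its total derivative along the denoising path. Concretely, set $\gamma(s) = (\mathbf{X}_s^{(i,f)}, s)$ for $s\in[t-1,t]$. Then
\begin{equation*}
\mathcal{R}_{t-1}(\mathbf{X}_{t-1}^{(i,f)}) - \mathcal{R}_{t}(\mathbf{X}_{t}^{(i,f)}) = -\int_{t-1}^{t} \Big( \nabla_s \mathcal{R}(\mathbf{X}_s^{(i,f)}, s) + \nabla_X \mathcal{R}\,\dot{\mathbf{X}}_s^{(i,f)} \Big)\, ds .
\end{equation*}
The second term is handled with the flow ODE $\dot{\mathbf{X}}_s = v_\theta = \mathbf{X}_s + \mathcal{R}_s$ (by \eqref{eq:chunk_residual}). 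The claim is that, to leading order in $\Delta t$, the dominant and spatially varying part of the difference is the explicit temporal-gradient term $\nabla_t \mathcal{R}$. The transport term gets absorbed into the constant, since $\|\nabla_X\mathcal{R}\|$ and $\|v_\theta\|$ are bounded on the data manifold. This split already relies on the ``$\lesssim$'' in the statement.

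The key step is to compare $\nabla_t \mathcal{R}$ at frame $f$ against the same quantity at the neighbouring frame $f-1$. By the Lipschitz hypothesis on $\nabla_t\mathcal{R}$ with respect to $X$, with constant $L$,
\begin{equation*}
\|\nabla_t \mathcal{R}(\mathbf{X}_t^{(i,f)},t) - \nabla_t \mathcal{R}(\mathbf{X}_t^{(i,f-1)},t)\|_2 \le L\,\|\mathbf{X}_t^{(i,f)} - \mathbf{X}_t^{(i,f-1)}\|_2 .
\end{equation*}
This bound is then made an absolute one by adopting a static-reference normalization. For a region with no motion, consecutive latent frames coincide, so the residual evolves identically for both frames. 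We argue that the common, frame-independent drift of the residual is exactly what the flow-matching schedule removes: along the linear path \eqref{eq:flow_match}, the target velocity $\mathbf{x}_{noise}-\mathbf{x}_{data}$ is constant in $t$ for fixed content. Hence for static content $\nabla_t\mathcal{R}$ vanishes up to higher-order terms, and we may take $\nabla_t \mathcal{R}(\mathbf{X}_t^{(i,f-1)},t)\approx 0$ as the reference in the regime where frame $f-1$ serves as the local static baseline. Combining this with the mean value theorem in $s$ over an interval of length one step gives the stated bound with $C = L$ times a step-size factor. The transport-term contributions and the $O(\Delta t)$ variation of $\mathbf{X}_s$ over $[t-1,t]$ are absorbed into $C$.

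The main obstacle is the reference step. The Lipschitz condition only controls differences of $\nabla_t\mathcal{R}$ between two inputs, not its absolute size, so some argument is needed that the previous frame acts as a zero-instability baseline. I would first try to justify it from the constancy of the ideal flow-matching velocity along the linear interpolation path, which makes the residual drift vanish for content shared across frames. Failing that, I would state it as an explicit modelling assumption, which is consistent with the heuristic ``$\lesssim$'' in \eqref{eq:lemma}. A secondary issue is the transport term; I would bound it by assuming $\|\nabla_X\mathcal{R}\|\,\|v_\theta\|$ is itself dominated by local motion, or simply fold it into $C$.
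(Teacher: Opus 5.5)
Your skeleton matches the paper's. You reduce the residual difference to one step length times a time derivative of $\mathcal{R}$; the paper does this with a first-order Taylor expansion in $t$, dropping $\mathcal{O}(\Delta t^2)$. You then apply the Lipschitz hypothesis between frames $f$ and $f-1$, and posit $\nabla_t\mathcal{R}(\mathbf{X}_t^{(i,f-1)},t)\approx 0$ as a static reference, which gives $C=L\,\Delta t$. The paper supports that reference only with the same trajectory-straightness heuristic, so treating it as an explicit modelling assumption is the right status for it.

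The step that fails is your disposal of the transport term. The contribution $\int_{t-1}^{t}\nabla_X\mathcal{R}\,\dot{\mathbf{X}}_s^{(i,f)}\,ds\approx \Delta t\,\nabla_X\mathcal{R}\,v_\theta$ is first order in $\Delta t$, the same order as the term you keep. It also does not vanish when $\mathbf{X}_t^{(i,f)}=\mathbf{X}_t^{(i,f-1)}$. Boundedness of $\nabla_X\mathcal{R}$ and $v_\theta$ therefore yields only an additive $\mathcal{O}(\Delta t)$ error, and no choice of constant turns that into $C\,\|\mathbf{X}_t^{(i,f)}-\mathbf{X}_t^{(i,f-1)}\|_2$.

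The split also undercuts your justification of the reference. Straightness of the interpolation path says the velocity is constant \emph{along} each path, not that its explicit partial derivative at fixed $\mathbf{X}$ vanishes. For example, with $\sigma(t)=t$ and a point-mass data distribution at $\mathbf{x}_{data}$, the field $v(\mathbf{X},t)=(\mathbf{X}-\mathbf{x}_{data})/t$ is constant on every path, yet $\partial_t\mathcal{R}=\partial_t v=-(\mathbf{X}-\mathbf{x}_{data})/t^2\neq 0$.

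The paper never separates these pieces. Its expansion assigns the entire first-order change to a single derivative $g=\partial\mathcal{R}/\partial t$, which it describes as the curvature of the ODE trajectory, and it puts both the Lipschitz hypothesis and the static reference on that $g$. To close your argument, do the same: read $\nabla_t\mathcal{R}$ in the hypothesis as the full derivative along the flow, $g(\mathbf{X},t)=\partial_t\mathcal{R}(\mathbf{X},t)+\nabla_X\mathcal{R}(\mathbf{X},t)\,v_\theta(\mathbf{X},t)$. This is also the only reading under which the paper's one-term expansion is correct to first order. Then your integrand is exactly $g(\mathbf{X}_s^{(i,f)},s)$ and no transport term is left over. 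Your mean-value step then gives $\Delta t\,L\,\|\mathbf{X}_t^{(i,f)}-\mathbf{X}_t^{(i,f-1)}\|_2$ up to $\mathcal{O}(\Delta t^2)$. The alternative is your fallback: assume outright that the transport term is itself bounded by a multiple of the frame difference.
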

The detailed proof is provided in Appendix \ref{sec:proof_lemma}. Equation \ref{eq:lemma} implies that the frame difference is not merely a heuristic, but a mathematically grounded upper bound for residual instability.

\textbf{Validation of the Motion Proxy.}
To empirically validate this correlation, we treat the caching decision as a ranking problem. We compare the token importance ranking derived from our proposed frame difference against the ground-truth ranking derived from the actual residual difference. As illustrated in \cref{ndcg}, the Normalized Discounted Cumulative Gain (NDCG) \cite{wang2013theoretical,jarvelin2017ir} scores consistently exceed 0.94 across denoising timesteps. This demonstrates that the frame difference preserves the relative order of token importance with high fidelity, confirming it as an effective surrogate to precisely identify critical tokens for computation while retrieving stable ones from the cache. 

\begin{figure}[t]
  \centering
    \centerline{\includegraphics[width=0.8\textwidth]{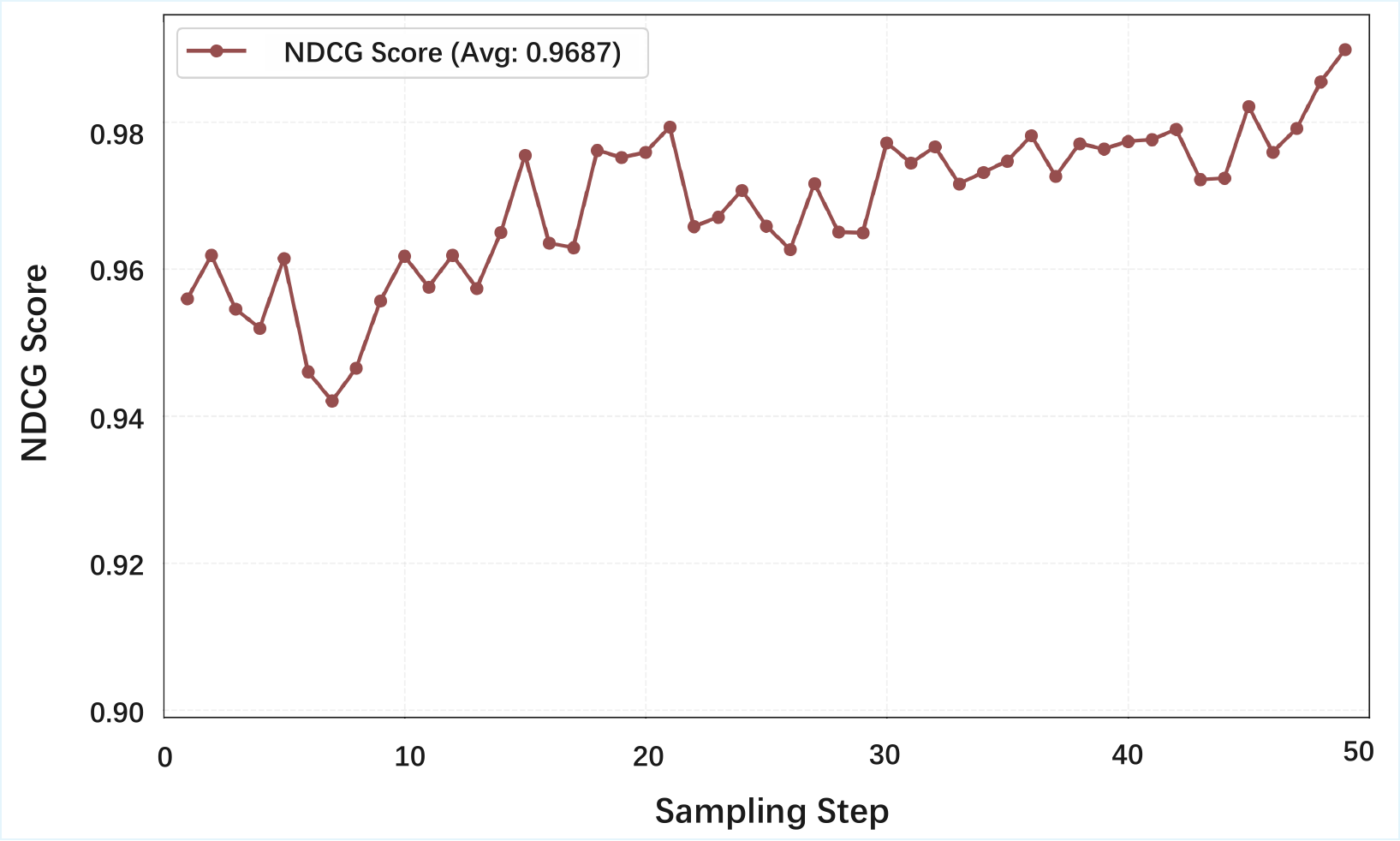}}
    \caption{
      \textbf{Validation of Motion Proxy.} NDCG \cite{wang2013theoretical,jarvelin2017ir} scores comparing frame difference-based token importance rankings to rankings derived from adjacent timestep residual differences. The scores remain consistently above 0.94, demonstrating strong similarity in token importance ordering throughout the diffusion process.
    }
    \label{ndcg}
\end{figure}

\begin{figure*}[t]
  \centering
    \centerline{\includegraphics[width=\textwidth]{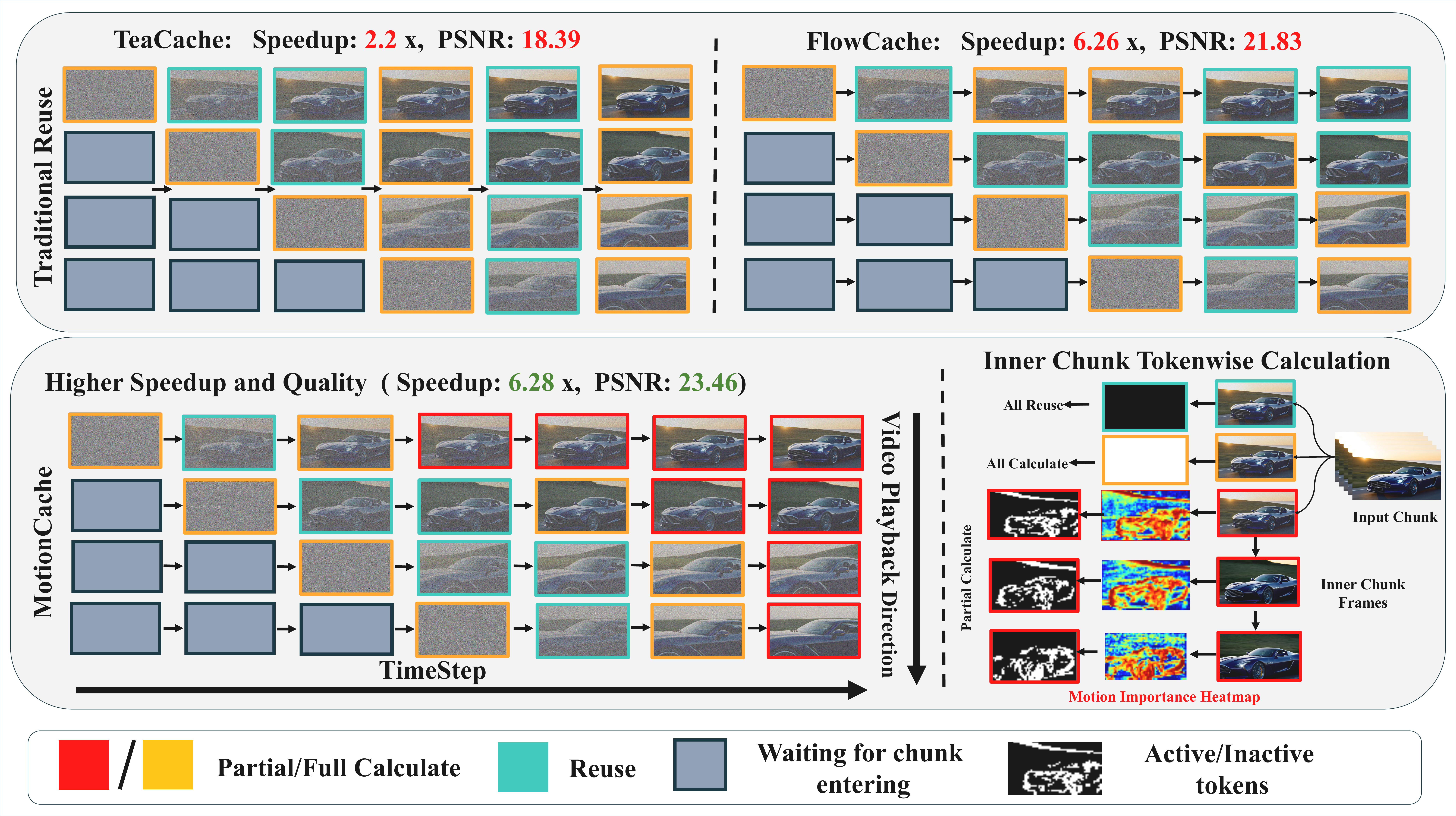}}     
    \caption{Comparison of caching strategies in autoregressive video generation. The top panel illustrates traditional reuse strategies (e.g., TeaCache and FlowCache), which apply coarse-grained caching policies by treating an entire timestep or chunk as an atomic unit for skipping. This approach overlooks fine-grained intra-chunk redundancy, forcing a binary decision between full computation or full reuse. In contrast, our MotionCache (bottom panel) employs a fine-grained Motion-Aware caching policy, dynamically deciding for each individual token whether to reuse cached residuals or perform recomputation based on motion dynamics. The bottom right panel details the Inner Chunk Tokenwise Calculation mechanism: it first calculates motion importance based on intra-chunk frame differences, then applies an importance-weighted accumulation policy to generate a binary selection mask, where white regions indicate active tokens selected for computation and black regions denote inactive tokens retrieved from the cache.}
    \label{method}
\end{figure*}

\section{Methodology}

\label{sec:method}

Based on the theoretical analysis in Sec. \ref{sec:analysis}, we propose a fine-grained, motion-aware caching strategy tailored for autoregressive video generation , as illustrated in \cref{method}. Our method dynamically allocates computational resources by prioritizing tokens in high-motion regions while efficiently reusing residuals for static backgrounds.

\subsection{Motion-Aware Token Importance}
\label{subsec:importance}
The core of our strategy lies in accurately identifying tokens that require frequent updates. As derived in Equation \ref{eq:lemma}, the intra-chunk frame difference serves as a robust proxy for residual instability. 
Let $\mathbf{X}_{t}^i \in \mathbb{R}^{F \times H \times W \times C}$ denote the latent of the $i$-th video chunk at denoising timestep $t$, containing $F$ frames. We define the importance map $\mathcal{M} \in \mathbb{R}^{F \times H \times W}$ based on the token-wise difference between adjacent frames.

For a specific frame $f$ within chunk $i$, the importance score $\mathcal{M}_{t}^{(i, f)}$ is computed using the output latent from the previous timestep $t+1$:
\begin{equation}\label{eq:mask}
    \mathcal{M}_{t}^{(i, f)} = 
    \begin{cases} 
    \| \mathbf{X}_{t+1}^{(i, f)} - \mathbf{X}_{t+1}^{(i, f-1)} \|_1 & \text{if } f > 0, \\
    \| \mathbf{X}_{t+1}^{(i, 0)} - \mathbf{X}_{t+1}^{(i-1, F-1)} \|_1 & \text{if } f = 0 \text{ and } i > 0, \\
    \mathcal{M}_{t}^{(0, 1)} & \text{if } f = 0 \text{ and } i = 0.
    \end{cases}
\end{equation}
Here, standard frames ($f>0$) calculate the difference with their preceding frame. The first frame of a chunk ($f=0, i>0$) computes the difference with the last frame of the previously generated chunk ($i-1$) to maintain temporal continuity. For the very first frame of the entire video ($f=0, i=0$), which lacks a temporal reference, we reuse the importance score of the second frame.

To convert this raw importance into a modulation weight for caching, we apply a soft-mapping function based on min-max normalization. Crucially, this operation is performed independently for each frame to adapt to the varying dynamic ranges of motion across different temporal moments. We first normalize the importance scores $\mathcal{M}$ within the specific frame $f$ to the range $[0, 1]$, and then linearly project them to a target interval $[\alpha, 1]$:
\begin{equation}\label{eq:softmapping}
    \mathcal{W}_{t}^{(i, f)} = \alpha + (1 - \alpha) \cdot \frac{\mathcal{M}_{t}^{(i, f)} - \min(\mathcal{M}_{t}^{(i, f)})}{\max(\mathcal{M}_{t}^{(i, f)}) - \min(\mathcal{M}_{t}^{(i, f)}) + \epsilon},
\end{equation}
where $\min(\mathcal{M}_{t}^{(i, f)})$ and $\max(\mathcal{M}_{t}^{(i, f)})$ denote the spatial minimum and maximum importance values strictly within the current frame $f$, and $\epsilon$ is a small constant for numerical stability. The parameter $\alpha \in [0, 1]$ serves as a floor value, ensuring that even static background tokens (where $\mathcal{W} \approx \alpha$) continue to accumulate update probability at a baseline rate rather than being completely frozen.

\begin{figure*}[t]
    \centering 
    \centerline{\includegraphics[width=\textwidth]{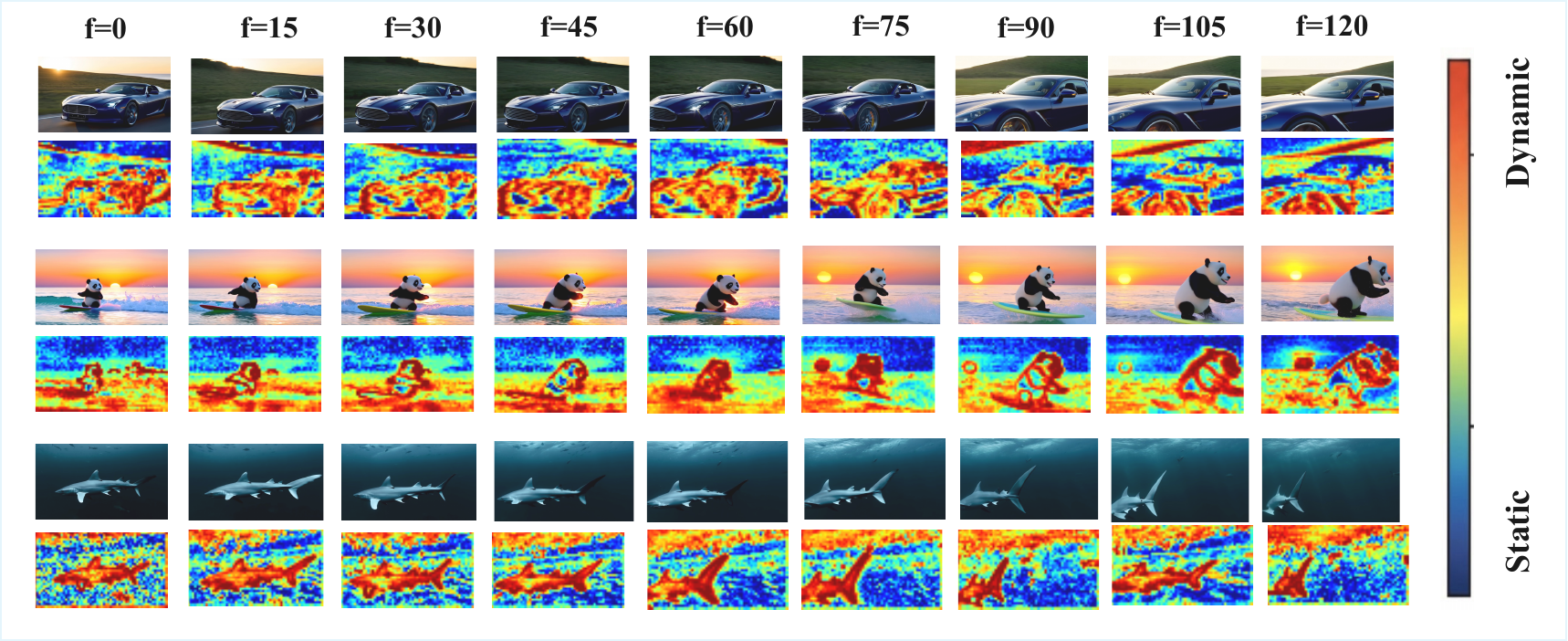}}   
    \caption{{
  Visualization of the ground-truth video frames versus the computed importance maps. 
  The label $f$ indicates the frame index within the video sequence. 
}
\label{framechange_viz}}
\end{figure*}

Qualitatively, as illustrated in \cref{framechange_viz}, the computed importance maps demonstrate precise spatial correspondence with the ground-truth video frames, effectively distinguishing dynamic regions from static backgrounds.

\subsection{Importance-Weighted Accumulation Policy}
\label{subsec:accumulation}
To dynamically determine the update frequency for each token, we introduce a weight-based accumulation mechanism. We track an error metric $\mathcal{A}$ for every spatial-temporal token location, which accumulates the estimated residual change over time.

At each timestep $t$, we first calculate the relative L1 distance for the $i$-th chunk, denoted as $\Delta_{chunk}$, to represent the overall magnitude of the latent update:
\begin{equation}\label{eq:golbal_chunk_l1_dist}
    \Delta_{chunk}(t) = \frac{\| \mathbf{X}_{t}^i - \mathbf{X}_{t+1}^i \|_1}{\| \mathbf{X}_{t+1}^i \|_1}.
\end{equation}
Then, we distribute this update budget to individual tokens based on their motion-aware weights. The accumulator for a token at position $p$ is updated as:
\begin{equation}\label{eq:weight_l1}
    \mathcal{A}_{t}[p] = \mathcal{A}_{t+1}[p] + \mathcal{W}_{t}[p] \cdot \Delta_{chunk}(t).
\end{equation}
This strategy effectively couples the temporal denoising progress with spatial motion dynamics. High-motion tokens ($\mathcal{W} \approx 1$) absorb the full change and accumulate error rapidly, while static background tokens ($\mathcal{W} \approx \alpha$) suppress the accumulation.
A token is selected for computation only when its accumulator exceeds a predefined threshold $\tau$:
\begin{equation}\label{eq:choose_exceed}
    \text{Mask}_{t}[p] = \mathbb{I}(\mathcal{A}_{t}[p] > \tau).
\end{equation}
where $\mathbb{I}(\cdot)$ denotes the indicator function that takes the value 1 if the condition holds and 0 otherwise. Upon selection, the token undergoes a forward pass, and its accumulator $\mathcal{A}_{t}[p]$ is reset to 0.

\subsection{Dual-Stage Coarse-to-Fine Inference Schedule}
\label{subsec:pipeline}
Video generation typically exhibits a coarse-to-fine progression, where global structures are established early, and high-frequency details are refined later~\cite{teng2025magi,lou2024token,kahatapitiya2025adaptive}. To align with this property, we implement a dual-stage coarse-to-fine inference schedule.

\textbf{Phase 1: Coarse-grained Structure Construction.} 
In the initial phase of generation, maintaining global structural integrity is paramount. As illustrated in \cref{ndcg}, the NDCG scores exhibit significant volatility during these early timesteps, indicating that the semantic foundation is not yet stabilized. 

Consequently, selective token forward at this stage could disrupt the formation of consistent semantic layouts. Therefore, we enforce a chunk-wise decision policy. 
At each step, the decision to compute or cache is synchronized across the entire chunk: the mask is effectively binary for the whole feature map ($\text{Mask} \in \{\mathbf{0}, \mathbf{1}\}$). The chunk is either fully updated or fully skipped. This phase continues until the model has performed a total of $K$ full computations, where $K$ is a hyperparameter controlling the structural foundation's solidity.

\textbf{Phase 2: Fine-grained Detail Refinement.} 
Once global structure stabilizes after $K$ full-chunk updates, we transition to the sparse token-wise adaptive mode described in Sec.~\ref{subsec:accumulation}. Leveraging the native KV cache, we gather only active tokens ($\text{Mask}[p]=1$) into a compact batch for the forward pass. These computed features are subsequently scattered back to update the residual cache $\mathcal{R}_{cache}$, while inactive tokens bypass computation by directly retrieving stored residuals for approximation.

\section{Experiments}
\subsection{Experimental Setup}
\textbf{Base Models.} 
To evaluate the efficacy of our proposed method, we selected two representative diffusion models based on the autoregressive paradigm: MAGI-1-4.5B-distill \cite{teng2025magi} and SkyReels-V2-1.3B \cite{chen2025skyreels}. For MAGI-1, we generate videos at 720p resolution consisting of 7 chunks, where each chunk contains 24 frames at 24 FPS. For SkyReels-V2, the generation targets a resolution of 540p, producing videos composed of 2 chunks with 97 frames each at 24 FPS.

\textbf{Evaluation Metrics.} 
Following established acceleration protocols such as FlowCache \cite{ma2026flow}  and TeaCache \cite{liu2025timestep}, we assess performance based on perceptual quality and computational efficiency. For quality, we employ standard metrics including LPIPS \cite{zhang2018unreasonable}, PSNR \cite{hore2010image}, and SSIM \cite{wang2004image}. Furthermore, we utilize the VBench-long benchmark \cite{huang2024vbench} for comprehensive video generation assessment; for brevity, we refer to this as VBench throughout the paper. Efficiency is quantified by measuring Floating Point Operations (FLOPs) and practical inference latency.

\textbf{Implementation Details.} 
All experiments are implemented in PyTorch and executed on NVIDIA A800 80GB GPUs. Further details regarding the model implementation, along with a detailed introduction and specific configurations of the evaluation metrics, are provided in Appendix \ref{sec:exp_details}. Our implementation and evaluation protocols are also broadly inspired by recent studies on efficient optimization, multimodal representation learning, and adaptive transformer systems~\cite{lin2023dual,lin2022end,xu2021enhanced,zhao2026aligning,wang2021multi,muhtar2026complementary,wang2026models}.

\begin{table*}[t]
\caption{Quantitative comparison with state-of-the-art acceleration methods on SkyReels-V2 and MAGI-1. "Slow" and "Fast" denote configurations with lower and higher acceleration ratios, respectively. \textbf{MotionCache} achieves superior speedups while maintaining higher generation quality compared to other baselines.}
\label{tab:main_results}
\centering 
\resizebox{\textwidth}{!}{
\begin{tabular}{llccccccc}
\toprule
\multirow{2}{*}{Model} & \multirow{2}{*}{Method} & PFLOPs & Speedup & Latency(s) & VBench & PSNR & SSIM & LPIPS\\
 & & $\downarrow$ & $\uparrow$ & $\downarrow$ & $\uparrow$ & $\uparrow$ & $\uparrow$ & $\downarrow$ \\
\midrule
\multirow{7}{*}{SkyReels-V2} 
 & Vanilla & 113 & 1$\times$ & 1540 & 83.84\% & - & - & - \\
 & TeaCache-slow & 58 & 1.89$\times$ & 814 & 82.67\% & 21.96 & 0.7501 & 0.1472 \\
 & TeaCache-fast & 49 & 2.2$\times$ & 686 & 80.06\% & 18.39 & 0.6121 & 0.3063 \\
 & FlowCache-slow & 31 & 6.26$\times$ & 246 & 82.70\% & 21.83 & 0.8733 & 0.1417 \\
 & FlowCache-fast & 27 & 7.19$\times$ & 214 & 82.38\% & 21.17 & 0.8697 & 0.1634 \\
 & \cellcolor{MyBlue}MotionCache-slow & \cellcolor{MyBlue}30 & \cellcolor{MyBlue}6.28$\times$ & \cellcolor{MyBlue}245 & \cellcolor{MyBlue}\textbf{82.84\%} & \cellcolor{MyBlue}\textbf{23.46} & \cellcolor{MyBlue}\textbf{0.9093} & \cellcolor{MyBlue}\textbf{0.0875} \\
 & \cellcolor{MyBlue}MotionCache-fast & \cellcolor{MyBlue}\textbf{26} & \cellcolor{MyBlue}\textbf{7.26$\times$} & \cellcolor{MyBlue}\textbf{212} & \cellcolor{MyBlue}82.75\% & \cellcolor{MyBlue}21.78 & \cellcolor{MyBlue}0.8723 & \cellcolor{MyBlue}0.1478 \\
\midrule
\multirow{7}{*}{MAGI-1} 
 & Vanilla & 139 & 1$\times$ & 1520 & 77.26\% & - & - & - \\
 & TeaCache-slow & 129 & 1.14$\times$ & 1339 & 76.64\% & 14.74 & 0.4132 & 0.6189 \\
 & TeaCache-fast & 101 & 1.41$\times$ & 1075 & 68.81\% & 11.98 & 0.2632 & 0.7670 \\
 & FlowCache-slow & 104 & 1.39$\times$ & 1094 & 77.08\% & 18.16 & 0.6486 & 0.3451 \\
 & FlowCache-fast & 78 & 1.94$\times$ & 782 & 73.42\% & 14.92 & 0.3998 & 0.6088 \\
 & \cellcolor{MyBlue}MotionCache-slow & \cellcolor{MyBlue}100 & \cellcolor{MyBlue}1.64$\times$ & \cellcolor{MyBlue}925 & \cellcolor{MyBlue}\textbf{77.25\%} & \cellcolor{MyBlue}\textbf{19.71} & \cellcolor{MyBlue}\textbf{0.7231} & \cellcolor{MyBlue}\textbf{0.2510} \\
 & \cellcolor{MyBlue}MotionCache-fast & \cellcolor{MyBlue}\textbf{64} & \cellcolor{MyBlue}\textbf{2.07$\times$} & \cellcolor{MyBlue}\textbf{733} & \cellcolor{MyBlue}74.59\% & \cellcolor{MyBlue}17.70 & \cellcolor{MyBlue}0.5600 & \cellcolor{MyBlue}0.4861 \\
\bottomrule
\end{tabular}
}
\vskip -0.1in
\end{table*}

\subsection{Main Result}
As shown in Table~\ref{tab:main_results}, MotionCache achieves superior efficiency-quality trade-offs compared to TeaCache and FlowCache. On MAGI-1, while TeaCache-fast and FlowCache-fast suffer significant quality degradation (VBench scores dropping to 68.81\% and 73.42\% respectively), MotionCache-fast achieves a 2.07$\times$ speedup while maintaining robust visual quality (VBench 74.59\%). MotionCache-slow delivers nearly lossless quality with a 1.64$\times$ acceleration, effectively preserving fine-grained semantic details that are otherwise lost in coarse-grained schemes.

The advantage is even more pronounced on SkyReels-V2. MotionCache-slow achieves a 6.28$\times$ acceleration with a VBench score of 82.84\%, significantly outperforming FlowCache-slow (6.26$\times$, 82.70\%) and TeaCache-slow (1.89$\times$, 82.67\%) in both speed and structural alignment (PSNR 23.46). 
MotionCache-fast maintains excellent quality (VBench 82.75\%) at a state-of-the-art 7.26$\times$ speedup, whereas existing methods exhibit noticeable texture drifting and structural misalignment at significantly lower acceleration ratios(more visualization resluts in Appendix~\ref{sec:appendix_viz}).


\subsection{Ablation Study}
To investigate the efficacy of our design choices, we analyze the impact of two pivotal hyperparameters: the soft-mapping floor $\alpha$ and the Phase 1 duration $K$. The complete ablation tables detailing the numerical results for all experimental settings are provided in Appendix \ref{sec:whole ablation}.

\textbf{Impact of Soft-mapping Parameter $\alpha$.}
As shown in Table \ref{tab:ablation_alpha}, $\alpha=0$ disables forced updates for static areas, while $\alpha=1$ eliminates spatial selectivity, degenerating the method to FlowCache. Increasing $\alpha$ enhances background preservation by raising the update frequency for static tokens, though at the cost of higher latency. Empirically, $\alpha=0.6$ strikes the optimal balance between quality and efficiency.

\textbf{Impact of Phase 1 Duration $K$.}
As shown in Table \ref{tab:ablation_k}, increasing $K$ extends the chunk-wise policy, eventually degenerating to FlowCache. While larger $K$ benefits global structure, it raises latency. Results indicate $K=6$ is optimal; further increasing $K$ yields marginal quality gains while adding computational overhead.


\begin{table}[t]
\caption{Ablation study on the soft-mapping floor parameter $\alpha$. }
\label{tab:ablation_alpha}
\centering
\begin{tabular}{cccc}
\toprule
$\alpha$ & PSNR $\uparrow$ & SSIM $\uparrow$ & LPIPS $\downarrow$ \\
\midrule
0.0 & 20.22 & 0.7944 & 0.5853 \\
0.2 & 21.82 & 0.8591 & 0.3663 \\
0.4 & 22.90 & 0.8950 & 0.1744 \\
0.6 & \textbf{23.46} & 0.9093 & 0.0875 \\
0.8 & 23.42 & 0.9095 & \textbf{0.0862} \\
1.0 & 23.44 & \textbf{0.9105} & 0.0866 \\
\bottomrule
\end{tabular}
\end{table}

\begin{table}[t]
\caption{Ablation study on the duration of Phase 1 ($K$).}
\label{tab:ablation_k}
\centering
\begin{tabular}{cccc}
\toprule
$K$ & PSNR $\uparrow$ & SSIM $\uparrow$ & LPIPS $\downarrow$ \\
\midrule
0 & 20.79 & 0.8636 & 0.1963 \\
3 & 22.92 & 0.8993 & 0.1027 \\
6 & \textbf{23.46} & 0.9093 & 0.0875 \\
9 & 23.39 & 0.9090 & 0.0883 \\
12 & 23.41 & 0.9092 & \textbf{0.0859} \\
15 & 23.43 & \textbf{0.9103} & 0.0865 \\
\bottomrule
\end{tabular}
\end{table}

\section{Conclusion}
In this paper, we presented MotionCache, a novel motion-aware caching framework designed to accelerate autoregressive video generation. By establishing a theoretical connection between residual instability and intra-chunk frame discrepancies, we introduced a lightweight, fine-grained proxy for token importance. This formulation allows the model to break free from the rigid "all-or-nothing" constraints of previous coarse-grained methods, enabling dynamic resource allocation that prioritizes high-motion regions while efficiently reusing residuals for static backgrounds.
Extensive experiments on state-of-the-art models, SkyReels-V2 and MAGI-1, demonstrate that MotionCache achieves significant speedups, yet delivers superior performance in perceptual quality and temporal coherence.
We believe this fine-grained, motion-centric paradigm offers a promising direction for efficient video synthesis, paving the way for real-time deployment for autoregressive  video generation models.

\section*{Acknowledgements}
This work is supported by the National Key Research and Development Program of China (No. 2025YFE0113500) , the Fundamental Research Funds for the Central Universities and the National Natural Science Foundation of China ( No. 62576299).

\clearpage

\bibliographystyle{plainnat}
\bibliography{main}

\clearpage

\beginappendix

\section{Detailed Proof of Proposition \ref{prop:residual_inconsistency}}
\label{sec:proof_prop}

\noindent \textbf{Restatement of Proposition \ref{prop:residual_inconsistency}.}
\textit{
The approximation error at timestep $t-1$ for chunk $i$ is strictly proportional to the magnitude of the vector difference between the true residual $\mathcal{R}_{t-1}^i$ and the cached residual $\mathcal{R}_{t}^i$:
\begin{equation}
    \epsilon_{t-1}^i = \Delta t \cdot \| \mathcal{R}_{t-1}^i - \mathcal{R}_{t}^i \|_2.
\end{equation}
}

\begin{proof}
Consider the standard Euler discretization step in the flow-matching framework. 
For the $i$-th video chunk at timestep $t-1$, the ground-truth update using the true velocity $v_\theta(\mathbf{X}_{t-1}^i, t-1, c)$ is given by:
\begin{equation}\label{eq:prop_true}
\begin{split}
    \mathbf{X}_{t-2}^i &= \mathbf{X}_{t-1}^i + v_\theta(\mathbf{X}_{t-1}^i, t-1, c) \Delta t \\
    &= \mathbf{X}_{t-1}^i + (\mathbf{X}_{t-1}^i + \mathcal{R}_{t-1}^i) \Delta t,
\end{split}
\end{equation}
where $\mathcal{R}_{t-1}^i$ is the true residual derived from the model's full computation.

When the feature caching mechanism is activated, the system bypasses the computation at $t-1$ and instead reuses the residual $\mathcal{R}_{t}^i$ stored from the preceding timestep $t$. Consequently, the approximated update is formulated as:
\begin{equation}\label{eq:prop_approx}
    \tilde{\mathbf{X}}_{t-2}^i = \mathbf{X}_{t-1}^i + (\mathbf{X}_{t-1}^i + \mathcal{R}_{t}^i) \Delta t.
\end{equation}

The local approximation error $\epsilon_{t-1}^i$ is defined as the Euclidean distance between the ground-truth output latent $\mathbf{X}_{t-2}^i$ and the approximated latent $\tilde{\mathbf{X}}_{t-2}^i$. Substituting Eq. \ref{eq:prop_true} and Eq. \ref{eq:prop_approx} into the error definition, we perform the subtraction:
\begin{equation}
\begin{aligned}
    \epsilon_{t-1}^i &= \| \mathbf{X}_{t-2}^i - \tilde{\mathbf{X}}_{t-2}^i \|_2 \\
    &= \bigg\| \Big[ \mathbf{X}_{t-1}^i + (\mathbf{X}_{t-1}^i + \mathcal{R}_{t-1}^i) \Delta t \Big] - \Big[ \mathbf{X}_{t-1}^i + (\mathbf{X}_{t-1}^i + \mathcal{R}_{t}^i) \Delta t \Big] \bigg\|_2 \\
    &= \bigg\| (\mathbf{X}_{t-1}^i - \mathbf{X}_{t-1}^i) + (\mathbf{X}_{t-1}^i \Delta t - \mathbf{X}_{t-1}^i \Delta t) + (\mathcal{R}_{t-1}^i \Delta t - \mathcal{R}_{t}^i \Delta t) \bigg\|_2 \\
    &= \| \Delta t \cdot (\mathcal{R}_{t-1}^i - \mathcal{R}_{t}^i) \|_2 \\
    &= \Delta t \cdot \| \mathcal{R}_{t-1}^i - \mathcal{R}_{t}^i \|_2.
\end{aligned}
\end{equation}
This derivation confirms that the error introduced by caching is linearly dependent on the step size $\Delta t$ and strictly determined by the instability of the residual vector $\mathcal{R}$ between adjacent timesteps.
\end{proof}

\section{Detailed Proof of Lemma \ref{lemma:motion_instability}}
\label{sec:proof_lemma}
\noindent \textbf{Restatement of Lemma \ref{lemma:motion_instability}.}
\textit{
Let $\mathcal{R}(X, t)$ be the continuous residual function derived from the velocity field. Assuming the temporal gradient of the residual field $\nabla_t \mathcal{R}$ satisfies the Lipschitz condition with respect to to the input latent $X$, the residual difference across timesteps is bounded by the intra-chunk frame difference:
\begin{equation}
    \| \mathcal{R}_{t-1}(\mathbf{X}_{t-1}^{(i, f)}) - \mathcal{R}_{t}(\mathbf{X}_{t}^{(i, f)}) \|_2 \lesssim C \cdot \| \mathbf{X}_{t}^{(i, f)} - \mathbf{X}_{t}^{(i, f-1)} \|_2,
\end{equation}
where $\mathbf{X}_{t}^{(i, f)}$ and $\mathbf{X}_{t}^{(i, f-1)}$ denote the latents of the $f$-th and $(f-1)$-th frames in the $i$-th chunk at timestep $t$, and $C$ is a constant.
}


\begin{proof}
First, we analyze the term on the left-hand side, which represents the variation of the residual across discrete timesteps. By applying the first-order Taylor expansion with respect to $t$, the residual at timestep $t-1$ can be approximated as:
\begin{equation}\label{eq:taylor_appx}
    \mathcal{R}_{t-1}(\mathbf{X}_{t-1}^{(i, f)}) = \mathcal{R}_t(\mathbf{X}_t^{(i, f)}) + \frac{\partial \mathcal{R}(\mathbf{X}_t^{(i, f)}, t)}{\partial t} \Delta t + \mathcal{O}(\Delta t^2).
\end{equation}
Ignoring higher-order terms, the magnitude of the residual difference is dominated by the partial derivative of the residual with respect to time:
\begin{equation}\label{eq:partial_derivative_appx}
    \| \mathcal{R}_{t-1}(\mathbf{X}_{t-1}^{(i, f)}) - \mathcal{R}_{t}(\mathbf{X}_{t}^{(i, f)}) \|_2 \approx \Delta t \cdot \left\| \frac{\partial \mathcal{R}(\mathbf{X}_t^{(i, f)}, t)}{\partial t} \right\|_2.
\end{equation}
Physically, the term $\frac{\partial \mathcal{R}}{\partial t}$ corresponds to the curvature of the generative ODE trajectory. In Flow Matching models, optimal transport trajectories for static data (i.e., $\mathbf{X}_{data}^{(i, f)} = \mathbf{X}_{data}^{(i, f-1)}$) are theoretically straight lines, implying zero curvature ($\frac{\partial \mathcal{R}}{\partial t} \to 0$). Conversely, complex dynamics induce curved trajectories.

We formalize this observation by assuming that the curvature function $g(\mathbf{X}) = \frac{\partial \mathcal{R}}{\partial t}$ is Lipschitz continuous with respect to the underlying signal motion. Specifically, comparing the current frame $f$ with its adjacent frame $f-1$ (serving as a reference for local staticity):
\begin{equation}\label{eq:curvature_function_appx}
    \left\| g(\mathbf{X}_t^{(i, f)}) - g(\mathbf{X}_t^{(i, f-1)}) \right\|_2 \le L \cdot \| \mathbf{X}_t^{(i, f)} - \mathbf{X}_t^{(i, f-1)} \|_2,
\end{equation}
where $L$ is the Lipschitz constant. Since the $(f-1)$-th frame serves as the immediate temporal context, for static regions where $\mathbf{X}_t^{(i, f)} \approx \mathbf{X}_t^{(i, f-1)}$, the trajectory linearity implies $g(\mathbf{X}_t^{(i, f-1)}) \approx 0$. Substituting this into Eq. \ref{eq:curvature_function_appx}:
\begin{equation}\label{eq:trajectory_linearity_appx}
    \left\| \frac{\partial \mathcal{R}(\mathbf{X}_t^{(i, f)}, t)}{\partial t} \right\|_2 \le L \cdot \| \mathbf{X}_t^{(i, f)} - \mathbf{X}_t^{(i, f-1)} \|_2.
\end{equation}
Finally, combining Eq. \ref{eq:partial_derivative_appx} and Eq. \ref{eq:trajectory_linearity_appx}, we obtain the bound:
\begin{equation}\label{eq:final_proof_appx}
    \| \mathcal{R}_{t-1}(\mathbf{X}_{t-1}^{(i, f)}) - \mathcal{R}_{t}(\mathbf{X}_{t}^{(i, f)}) \|_2 \le (\Delta t \cdot L) \cdot \| \mathbf{X}_{t}^{(i, f)} - \mathbf{X}_{t}^{(i, f-1)} \|_2.
\end{equation}
This concludes the proof. The caching error (residual instability) is strictly upper-bounded by the intra-chunk frame difference.
\end{proof}

\section{Experimental Details}
\label{sec:exp_details}

\subsection{Video Configuration and Model Implementation}
\textbf{Video Configuration.}
Regarding the specific hyperparameter settings for MotionCache, for SkyReels-V2, we set $\alpha = 0.5$ and $K = 6$. For MAGI-1, we utilize $\alpha = 0.5$ and $K = 9$.
Additionally, following FlowCache~\cite{ma2026flow}, we designate the first $m$ timesteps as a global warm-up phase where cache reuse is disabled to ensure trajectory stability. We set $m=5$ for MAGI-1 and $m=4$ for SkyReels-V2.

\textbf{Architectural Differences.}
While sharing an autoregressive foundation, the two models diverge fundamentally in their execution granularity. MAGI-1 operates at the inter-chunk level, utilizing a sliding window to manage the concurrent denoising of sequentially dependent chunks. Conversely, SkyReels-V2 employs a hierarchical intra-chunk strategy, subdividing chunks into granular blocks. It enforces a staggered inference schedule where earlier blocks precede subsequent ones in the denoising chain, resulting in asynchronous noise levels across the sequence at any given timestep.

\subsection{Evaluation Metrics Selection}
We evaluated performance using representative VBench metrics selected based on established practices in video generation compression research \cite{zhao2024vidit, feng2025q, yangveta, shao2025tr}. To facilitate an intuitive comparison, we compute the average scores of the selected VBench metrics using the official normalization and weighting methodology provided by the VBench benchmark.

\section{Detailed Ablation Study Results}
\label{sec:whole ablation}

In this section, we present the comprehensive quantitative results for the hyperparameter ablation studies discussed in the main text. Specifically, we detail the performance variations across the full sweep range of the soft-mapping floor parameter $\alpha$ and the Phase 1 duration parameter $K$.

\begin{table}[t]
\caption{Detailed ablation study on the soft-mapping floor parameter $\alpha$. The sweep ranges from 0.0 to 1.0 with a step of 0.1.}
\label{tab:ablation_alpha_full}
\centering
\begin{tabular}{cccc}
\toprule
$\alpha$ & PSNR $\uparrow$ & SSIM $\uparrow$ & LPIPS $\downarrow$ \\
\midrule
0.0 & 20.22 & 0.7944 & 0.5853 \\
0.1 & 20.98 & 0.8254 & 0.5040 \\
0.2 & 21.82 & 0.8591 & 0.3663 \\
0.3 & 22.59 & 0.8880 & 0.2189 \\
0.4 & 22.90 & 0.8950 & 0.1744 \\
0.5 & 23.21 & 0.9037 & 0.1165 \\
0.6 & \textbf{23.46} & 0.9093 & 0.0875 \\
0.7 & 23.41 & 0.9072 & 0.0872 \\
0.8 & 23.42 & 0.9095 & \textbf{0.0862} \\
0.9 & 23.43 & 0.9082 & 0.0877 \\
1.0 & 23.44 & \textbf{0.9105} & 0.0866 \\
\bottomrule
\end{tabular}
\end{table}

\subsection{Full Evaluation of Soft-mapping Floor $\alpha$}
Table \ref{tab:ablation_alpha_full} lists the complete metrics for $\alpha$ ranging from $0.0$ to $1.0$ with an interval of $0.1$. As observed in the table, the performance metrics stabilize significantly when $\alpha$ exceeds $0.5$, showing minimal variance in quality beyond this point. Conversely, lower $\alpha$ values assign insufficient importance weights to static background tokens, preventing them from reaching the update threshold. This lack of necessary updates leads to the degradation of fine-grained background details.

\begin{table}[t]
\caption{Detailed ablation study on the duration of Phase 1 ($K$). The sweep ranges from 0 to 17  with an interval of 1.}
\label{tab:ablation_k_full}
\centering
\begin{tabular}{cccc}
\toprule
$K$ & PSNR $\uparrow$ & SSIM $\uparrow$ & LPIPS $\downarrow$ \\
\midrule
0 & 20.79 & 0.8636 & 0.1963 \\
1 & 21.74 & 0.8902 & 0.1452 \\
2 & 22.92 & 0.8960 & 0.1214 \\
3 & 22.92 & 0.8993 & 0.1027 \\
4 & 22.82 & 0.8970 & 0.1061 \\
5 & 23.28 & 0.9054 & 0.0918 \\
6 & \textbf{23.46} & 0.9093 & 0.0875 \\
7 & 23.37 & 0.9096 & 0.0870 \\
8 & 23.33 & 0.9086 & 0.0863 \\
9 & 23.39 & 0.9090 & 0.0883 \\
10 & 23.41 & 0.9089 & 0.0888 \\
11 & 23.41 & 0.9094 & 0.0866 \\
12 & 23.41 & 0.9092 & \textbf{0.0859} \\
13 & 23.43 & \textbf{0.9106} & 0.0872 \\ 
14 & 23.43 & 0.9102 & 0.0864 \\
15 & 23.43 & 0.9103 & 0.0865 \\
16 & 23.42 & 0.9096 & 0.0868 \\ 
17 & 23.43 & 0.9103 & 0.0867 \\
\bottomrule
\end{tabular}
\end{table}

\subsection{Full Evaluation of Phase 1 Duration $K$}
Table \ref{tab:ablation_k_full} presents the performance metrics for the Phase 1 duration $K$ ranging from $0$ to $17$. Notably, the setting of $K=17$ corresponds to the FlowCache baseline, where the coarse-grained full update is applied throughout the entire generation process. As indicated by the data, the evaluation scores exhibit significant stability once $K$ exceeds $5$. This trend suggests that the global semantic structure is sufficiently established by this stage, ensuring that the spatial masks can accurately align with and capture the dynamic tokens within the video.

\section{Supplementary Theoretical Analysis for Lemma \ref{lemma:motion_instability}}
\label{sec:supp_lemma_theory}

This section provides additional theoretical justification for the Lipschitz assumption and the validity of omitting higher-order terms in the Taylor expansion used in Lemma \ref{lemma:motion_instability}.

\subsection{Justification of the Lipschitz Assumption}
The Lipschitz continuity of the residual temporal gradient $\nabla_t \mathcal{R}(\mathbf{X}, t)$ with respect to the input latent $\mathbf{X}$ is a theoretically grounded property of modern Diffusion Transformer (DiT) backbones.

Formally, let $g(\mathbf{X}) = \nabla_t \mathcal{R}(\mathbf{X}, t)$ denote the curvature function of the generative trajectory. For a DiT backbone with sinusoidal time embedding, the velocity field can be expressed as:
\begin{equation}
v_\theta(\mathbf{X}, t) = \mathcal{F}_\theta(\mathbf{X}, e(t)),
\end{equation}
where $e(t) = [\sin(\omega_k t), \cos(\omega_k t)]_{k=1}^K$ is the sinusoidal time embedding with bounded frequencies $\omega_k$. The residual function is then:
\begin{equation}
\mathcal{R}(\mathbf{X}, t) = v_\theta(\mathbf{X}, t) - \mathbf{X}.
\end{equation}

By the chain rule, the temporal gradient of the residual is:
\begin{equation}
\frac{\partial \mathcal{R}}{\partial t} = \frac{\partial \mathcal{F}_\theta}{\partial e} \cdot \frac{de(t)}{dt}.
\end{equation}

The Lipschitz constant of $g(\mathbf{X})$ is bounded by the product of two terms:
1.  The spectral norm of the Jacobian $\left\| \frac{\partial \mathcal{F}_\theta}{\partial e} \right\|_2$, which is guaranteed to be finite by the weight normalization and spectral regularization inherent in modern transformer training.
2.  The magnitude of the time embedding derivative $\left\| \frac{de(t)}{dt} \right\|_2$, which is bounded by the maximum frequency $\max(\omega_k)$ in the sinusoidal embedding.

Since both terms are bounded by design, the function $g(\mathbf{X})$ satisfies the Lipschitz condition with constant $L = \left\| \frac{\partial \mathcal{F}_\theta}{\partial e} \right\|_2 \cdot \max(\omega_k)$.

\subsection{Validity of Omitting Higher-Order Terms}
The omission of the $\mathcal{O}(\Delta t^2)$ term in the Taylor expansion is empirically justified by the significant magnitude gap between the first-order term and higher-order terms in practical diffusion schedules.

For a typical 50-step denoising schedule used in our experiments, the time step size is $\Delta t = 1/50 = 0.02$, resulting in $\Delta t^2 = 0.0004$. In contrast, as shown in \cref{fig:frame_time_diff} (a) , the magnitude of the residual difference $\| \mathcal{R}_{t-1} - \mathcal{R}_t \|_2$ is at least 1.175 across all timesteps. This means the higher-order term is at least three orders of magnitude smaller than the first-order term, making it negligible in practice.

\section{Peak Memory and FLOPs Analysis for Long Video Generation}
\label{sec:long_video_memory_flops}

To validate the memory stability and computational efficiency of MotionCache in long video scenarios, we supplement the peak memory and FLOPs breakdown analysis for both the original 7s setting and the 10s setting.

\subsection{Key Observations}
From the detailed results presented below, we draw the following conclusions:
1.The main FLOPs cost consistently comes from attention operations (self + cross), which scale quadratically with sequence length and dominate the overall computation.
2.Compared with other methods, the peak memory increase introduced by MotionCache remains stable as video length increases from 7s to 10s, and is acceptable in practice.
3.MotionCache consistently maintains lower overall computation than all baselines across both video lengths, demonstrating its superior scalability.

Taken together, these results show that MotionCache remains applicable and effective in long-video generation scenarios.

\subsection{Detailed Results on SkyReels-V2}
Table \ref{tab:memory_flops_skyreels} presents the peak memory and FLOPs breakdown for SkyReels-V2 on 7s and 10s videos.

\begin{table}[h]
\caption{Peak memory and FLOPs breakdown for SkyReels-V2 (7s vs 10s)}
\label{tab:memory_flops_skyreels}
\centering
\begin{tabular}{llccccc}
\toprule
Length & Method & Peak Memory (GB) & Total TFLOPs & Attn (Self+Cross) & AttnGEMM & FFNGEMM \\
\midrule
\multirow{4}{*}{7s} 
 & vanilla & 34 & 113 & 79 & 12 & 22 \\
 & teacache & 36 & 58 & 40 & 6 & 12 \\
 & flowcache & 42 & 31 & 22 & 3 & 6 \\
 & motioncache & 42 & 30 & 21 & 3 & 6 \\
\midrule
\multirow{4}{*}{10s} 
 & vanilla & 34 & 168 & 117 & 17 & 34 \\
 & teacache & 36 & 72 & 50 & 7 & 15 \\
 & flowcache & 42 & 47 & 33 & 5 & 9 \\
 & motioncache & 42 & 40 & 28 & 4 & 8 \\
\bottomrule
\end{tabular}
\end{table}

Notably, MotionCache maintains the same peak memory usage (42GB) as FlowCache for both 7s and 10s videos, while achieving lower total FLOPs.

\subsection{Detailed Results on MAGI-1}
Table \ref{tab:memory_flops_magi} presents the peak memory and FLOPs breakdown for MAGI-1 on 7s and 10s videos.

\begin{table}[h]
\caption{Peak memory and FLOPs breakdown for MAGI-1 (7s vs 10s)}
\label{tab:memory_flops_magi}
\centering
\begin{tabular}{llccccc}
\toprule
Length & Method & Peak Memory (GB) & Total TFLOPs & Attn (Self+Cross) & AttnGEMM & FFNGEMM \\
\midrule
\multirow{4}{*}{7s} 
 & vanilla & 26.41 & 139 & 84 & 20 & 35 \\
 & teacache & 26.66 & 129 & 78 & 18 & 33 \\
 & flowcache & 22.45 & 104 & 63 & 15 & 26 \\
 & motioncache & 22.45 & 100 & 60 & 14 & 26 \\
\midrule
\multirow{4}{*}{10s} 
 & vanilla & 31.39 & 207 & 128 & 29 & 50 \\
 & teacache & 31.64 & 195 & 121 & 27 & 47 \\
 & flowcache & 27.43 & 145 & 90 & 20 & 35 \\
 & motioncache & 27.43 & 142 & 89 & 19 & 34 \\
\bottomrule
\end{tabular}
\end{table}

\section{Analysis of RGB-domain Optical Flow Synergy}
\label{sec:rgb_optical_flow}

To address the question of whether optical flow in the RGB domain can work synergistically with our current acceleration mechanisms, we provide a detailed analysis of the computational trade-offs and architectural compatibility.

\subsection{Architectural Incompatibility}
Our MotionCache framework operates entirely in the VAE-compressed latent space, which is critical for its high efficiency. In contrast, RGB-domain optical flow must be computed on raw pixel data. This would require decoding the latent tokens back to RGB at every denoising step, which is prohibitively expensive in practice.

For example, on SkyReels-V2, a single VAE decode operation for one video chunk takes 9.7 seconds on an NVIDIA A800 80GB GPU. Since our method performs caching across dozens of timesteps, the cumulative overhead of repeated VAE decoding would completely negate any acceleration benefits from caching.

\subsection{Computational Overhead Comparison}
We further compared the inference time of our proposed frame difference motion cue with two standard optical flow methods: sparse optical flow (KLT) and dense optical flow (Farneback). 

\begin{table}[h]
\caption{Computational time comparison of different motion cues}
\label{tab:motion_cue_time}
\centering
\begin{tabular}{lc}
\toprule
Motion Cue & Time (ms) \\
\midrule
Frame Difference (Ours) & 3.46 \\
Sparse Optical Flow (KLT) & 1483 \\
Dense Optical Flow (Farneback) & 3201 \\
\bottomrule
\end{tabular}
\end{table}

As shown in Table \ref{tab:motion_cue_time}, both sparse and dense optical flow are 429× and 925× slower than our simple frame difference signal, respectively. This makes them impractical for efficient acceleration, as their computational overhead would far exceed the savings from caching.

\section{More Qualitative Results}
\label{sec:appendix_viz}

In this section, we provide extensive qualitative visualizations to further validate the effectiveness of MotionCache. We first visualize the temporal evolution of the motion-aware importance maps to justify our coarse-to-fine schedule. Subsequently, we present visual comparisons of the actual generated videos across different methods on SkyReels-V2 and MAGI-1.

\subsection{Evolution of Motion Importance Maps}
To justify the necessity of the proposed Dual-Stage Coarse-to-Fine Inference Schedule (Sec. \ref{subsec:pipeline}), we visualize the importance weight maps $\mathcal{W}$ throughout the denoising process in Figure \ref{timechange_viz}. 

As observed in the early timesteps, the importance weights exhibit a diffuse and unstructured distribution. At this stage, the global semantic layout is not yet stabilized, and the model cannot effectively distinguish between foreground motion and static background. Consequently, a rigid chunk-wise update (Phase 1) is crucial here to ensure structural integrity. 
As the denoising progresses, the importance maps become increasingly sparse and structured, precisely highlighting the dynamic contours of the moving subject. This clear separation validates the transition to Phase 2, where our token-wise caching strategy efficiently allocates computation to these high-motion regions.

\begin{figure}[h]
  \centering
  \centerline{\includegraphics[width=\textwidth]{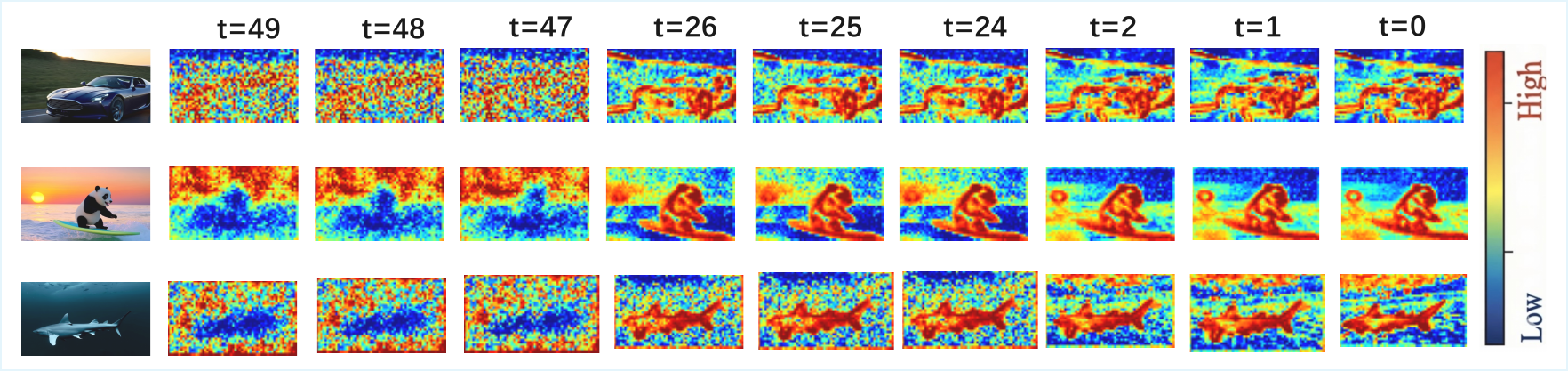}}     
  \caption{Visualization of the importance weight maps $\mathcal{W}$ throughout the denoising process. The label $t$ indicates the denoising timestep. The leftmost column displays the final ground-truth video frames. In the early inference stages, the weight distribution remains diffuse and unstructured with ambiguous contours, indicating that the global semantic structure is not yet clearly established. As generation proceeds, the maps sharpen to accurately capture motion dynamics.}
  \label{timechange_viz}
\end{figure}

\subsection{Qualitative Comparison on SkyReels-V2}
\label{subsec:skyreels_viz}

Figure \ref{skyreel_viz} presents a visual comparison on SkyReels-V2. While TeaCache provides a $2.2\times$ speedup, it suffers from noticeable high-frequency noise, particularly evident in the astronaut and beer scenarios. FlowCache achieves a significant speedup of $6.26\times$ but is prone to semantic misalignment and content drift; for instance, the cyclist's sleeve texture is missing, and the person tasting beer exhibits anatomical hallucinations (e.g., six fingers). In contrast, our MotionCache achieves a comparable high speedup of $6.28\times$ while preserving superior visual fidelity. It effectively maintains structural consistency with the Vanilla baseline and achieves the highest PSNR.

\begin{figure}[h]
  \centering
  \centerline{\includegraphics[width=\textwidth]{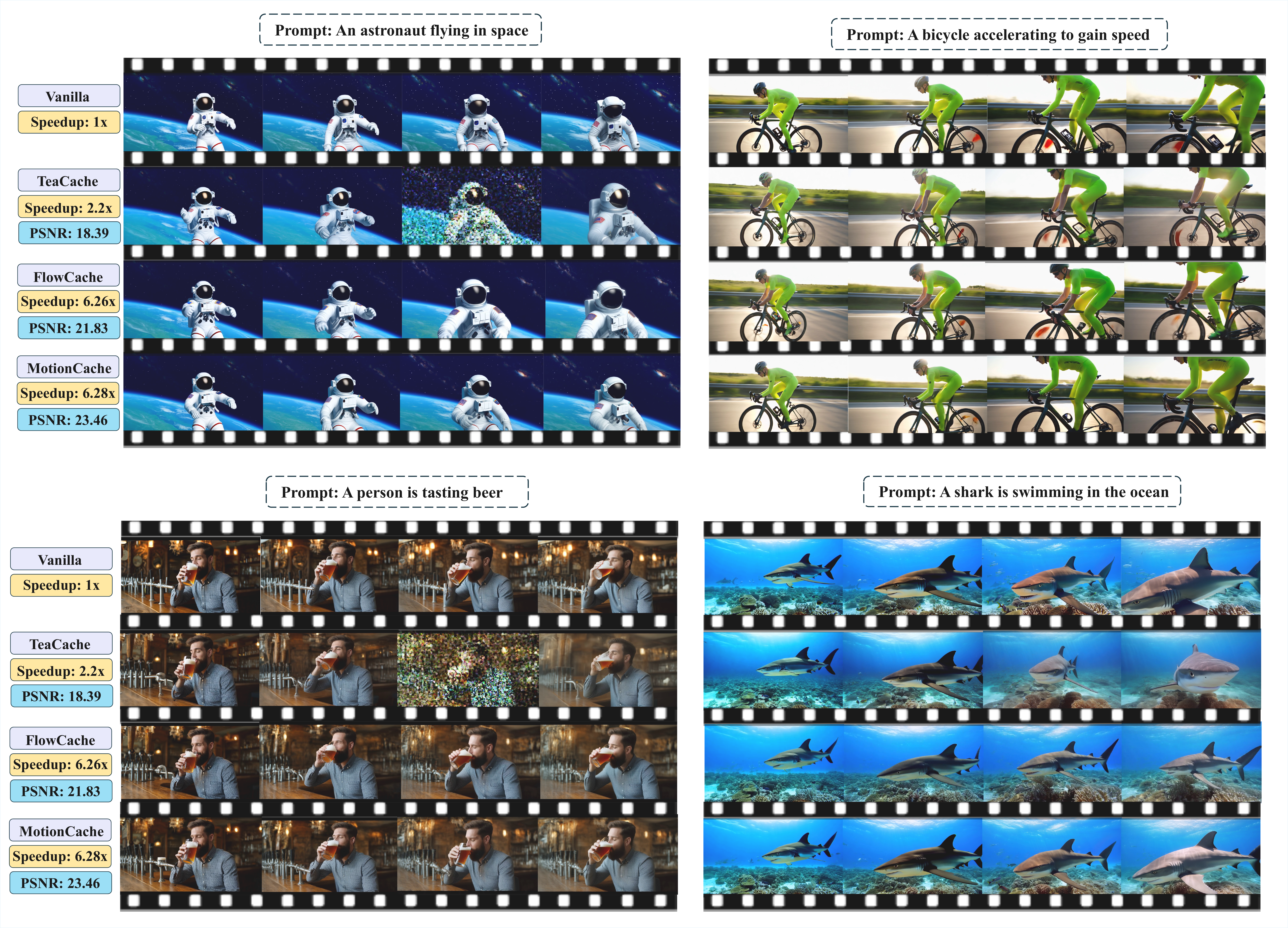}}
  \caption{Qualitative results of text-to-video generation on SkyReels-V2. We present TeaCache, FlowCache, MotionCache, and the Vanilla model. The frames are randomly sampled from the generated video.}
  \label{skyreel_viz}
\end{figure}

\subsection{Qualitative Comparison on MAGI-1}
\label{subsec:magi_viz}
Figure \ref{magi_viz} illustrates the visual results on MAGI-1. Similar to SkyReels-V2, MotionCache demonstrates a superior ability to maintain high fidelity to fine-grained semantic details that are often lost during acceleration. A striking example is seen in the "elephant" sequence: while other methods result in the complete disappearance of the elephant's tusks, MotionCache successfully preserves them by accurately identifying and updating these critical regions. Furthermore, our method maintains a consistent and natural horse color throughout the video, avoiding the color bleeding and flickering artifacts prevalent in TeaCache and FlowCache. These qualitative improvements underscore that MotionCache's token-wise precision is essential for preserving the structural and aesthetic integrity of complex subjects.

\begin{figure}[h]
  \centering
  \centerline{\includegraphics[width=\textwidth]{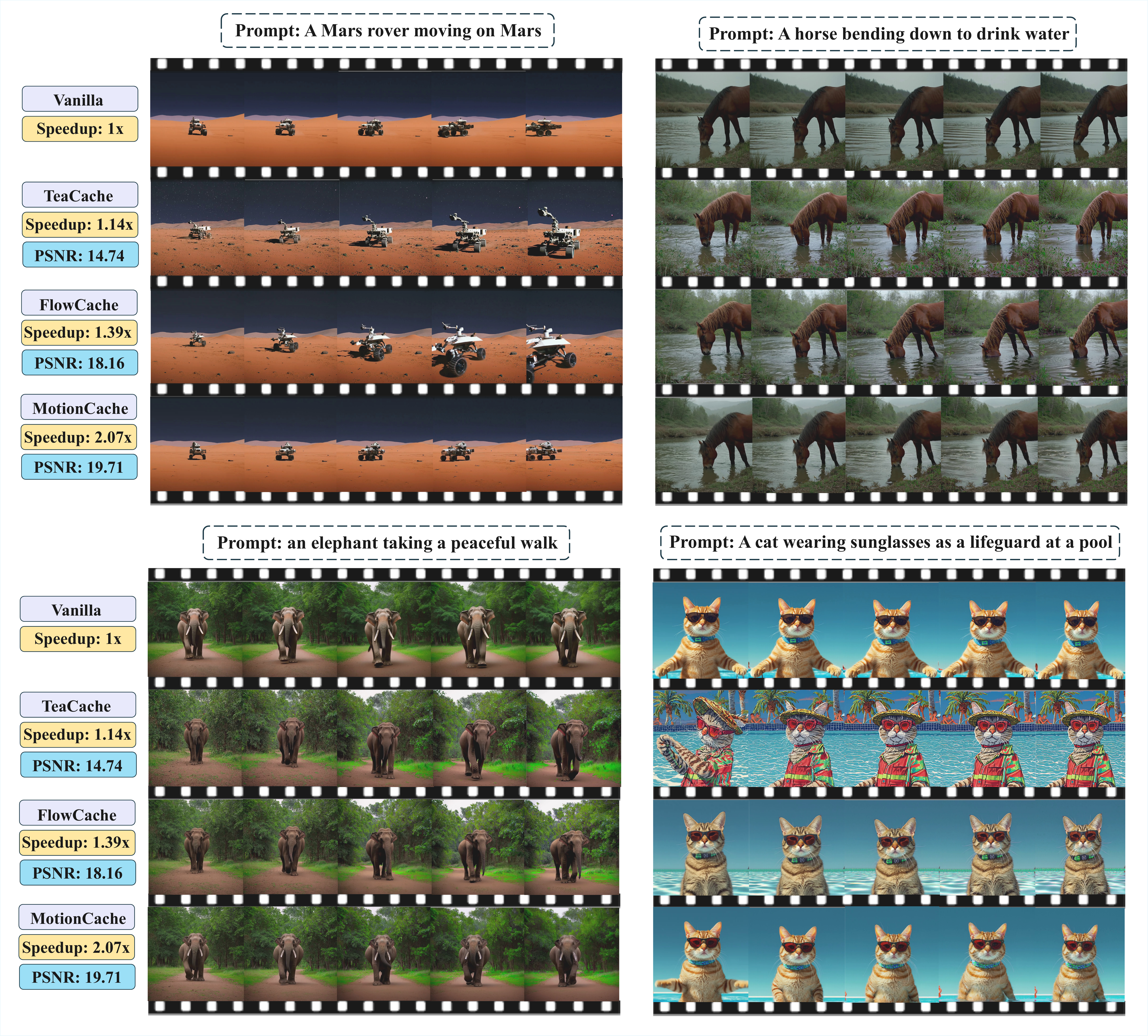}}
  \caption{Qualitative results of text-to-video generation on MAGI-1. We present TeaCache, FlowCache, MotionCache, and the Vanilla model. The frames are randomly sampled from the generated video.}
  \label{magi_viz}
\end{figure}

\end{document}